\documentclass[11pt]{article}
\usepackage[sf]{titlesec}
\usepackage{graphicx,wrapfig}
\usepackage{setspace}
\usepackage{booktabs}
\usepackage{amsmath}
\usepackage{comment}

\usepackage[ruled,vlined]{algorithm2e}

\usepackage{svg}
\usepackage{amsthm}
\usepackage{mathtools}

\usepackage[utf8]{inputenc}
\usepackage{tikz}

\usetikzlibrary{
   arrows.meta, 
   bending
}

\usepackage{hyperref}
\usepackage{scalerel,amssymb}
\usetikzlibrary{shapes,arrows}
\usepackage{graphics} 
\usepackage{epsfig} 
\usepackage{subcaption}
\usepackage[square,numbers]{natbib}
\usepackage{authblk}
\usepackage[margin=0.6in]{geometry}

\usepackage{float}

\newtheorem{prop}{Proposition}

\restylefloat{figure}

\providecommand*{\ped}[1]{\ensuremath{_\mathrm{#1}}}

\usepackage{enumitem}
\setlist[itemize]{noitemsep}

\titlespacing\section{0pt}{12pt plus 2pt minus 2pt}{0pt plus 2pt minus 2pt}
\titlespacing\subsection{0pt}{12pt plus 2pt minus 2pt}{0pt plus 2pt minus 2pt}
\titlespacing\subsubsection{0pt}{12pt plus 2pt minus 2pt}{0pt plus 2pt minus 2pt}

\usepackage{scalefnt}
\usepackage{lipsum}

\providecommand*{\diff}%
   {\@ifnextchar^{\DIfF}{\DIfF^{}}}
\def\DIfF^#1{%
   \mathop{\mathrm{\mathstrut d}}%
      \nolimits^{#1}\gobblespace
}
\def\gobblespace{%
   \futurelet\diffarg\opspace}
\def\opspace{%
   \let\DiffSpace\!%
   \ifx\diffarg(%
      \let\DiffSpace\relax
   \else
      \ifx\diffarg\[%
         \let\DiffSpace\relax
      \else
         \ifx\diffarg\{%
            \let\DiffSpace\relax
         \fi\fi\fi\DiffSpace}
\makeatother

\title{Loiter UAV Reinsertion Guidance for Fixed-wing UAV Corridors\footnote{Copyright © 2026 by Pradeep J, Kedarisetty Siddhardha, and Ashwini Ratnoo. Published by the American Institute of Aeronautics and Astronautics, Inc., with permission.} \footnote{Presented in AIAA SCITECH 2026 Forum, Publised version DoI: https://doi.org/10.2514/6.2026-1982}}

\author{Pradeep J\footnote{B.Tech (Honours) Student, Department of Mechatronics Engineering;pradeepj0406@gmail.com} and Kedarisetty Siddhardha \footnote{Assistant Professor, Department of Mechatronics Engineering; siddhardhak@iitbhilai.ac.in}}
\affil{Indian Institute of Technology Bhilai, Bhilai, 491002, India}
\author{Ashwini Ratnoo\footnote{Professor, Department of Aerospace Engineering; ratnoo@iisc.ac.in. Associate Fellow AIAA.}}
\affil{Indian Institute of Science, Bengaluru, 560012, India}

\date{\vspace{-7ex}}

\begin{document}

\maketitle

\begin{abstract}
This paper considers fixed-wing unmanned aerial vehicle (UAV) corridors comprising a main lane, a circular loiter lane for managing traffic congestion, and transit lanes connecting the two.
In particular, we address the problem of conflict-free reinsertion of UAVs from the loiter lane back into the main lane.
The loiter lane contains a fixed number of equidistant virtual slots that UAVs can occupy.
Reinsertion of loiter UAVs into the main lane becomes essential either due to reduced traffic in the main lane or due to a loiter UAV needing to reach its destination urgently.
Given the total number of loiter slots, UAV speed limits, and the minimum safety distance, a guidance algorithm is developed to compute the required speed of a loiter UAV in the transit lane to ensure safe reinsertion.
The proposed guidance and automation strategies are validated through numerical simulations.

\end{abstract}

\section{Introduction}

The widespread use of UAVs for applications such as communications~\cite{ghamari2022unmanned}, defence~\cite{siddhardha2023intercepting}, and goods delivery~\cite{betti2024uav} is accelerating due to steady advancements in technologies like novel configurations~\cite{siddhardha2018novel}, high endurance batteries~\cite{xiao2023design}, and advanced guidance algorithms~\cite{tsalik2025n}.
In this context, traffic management algorithms that resolve conflicts and ensure smooth UAV traffic flow are critically important.
To enable congestion-free operations, several decentralized conflict resolution techniques have been developed, including congestion-avoidance~\cite{maniccam2006adaptive}, dynamic traffic routing~\cite{badrinath2019integrated}, and backpressure algorithms~\cite{chin2023traffic}.
In parallel, structured airspace protocols based on virtual corridors and lane-based traffic management systems are being formulated to enable safe and organized UAV operations in Class-G airspace~\cite{yadav2021uav}.
Notable emerging UAV traffic management (UTM) systems include NASA UTM~\cite{prevot2016uas}, SESAR U-space~\cite{huttunen2019u}, Corridrone~\cite{tony2020corridrone}, and EuroDRONE~\cite{lappas2022eurodrone}.
A key challenge in designing such UTMs is preventing traffic congestion within the corridor.
This is especially critical for corridors serving fixed-wing UAVs, which require a minimum forward speed to maintain flight.

While corridor-based architectures help organize UAV movement, it is crucial for both UTMs and individual UAVs to use algorithms that ensure collision-free operation.
Geofencing is one such technique that prevents collisions by surrounding objects with virtual boundaries~\cite{tony2021lane}.
However, geofencing mainly sets safety rules for UAV navigation and does not help reduce traffic congestion or manage changes in traffic flow.
Corridor congestion can arise due to limited landing or approach slots, intersecting corridors, or adverse weather conditions~\cite{bhise2022signed}.
In these situations, fixed-wing UAVs need to use loitering strategies to safely manage traffic and maintain flow through the corridor.

A few studies have demonstrated the value of loitering strategies in UAV applications.
Schouwenaars et al.\cite{schouwenaars2004receding} proposed a path-planning method for fixed-wing UAVs that uses loitering to hold position while navigating cluttered environments.
Wilhelm et al.~\cite{wilhelm2016development, wilhelm2017direct} developed efficient loitering strategies for surveillance, enabling smooth, tangential entry into loiter paths around multiple targets and minimizing travel distance.
However, these approaches assume unoccupied loiter paths and do not address safe, coordinated use among multiple UAVs.
In our previous work~\cite{kedarisetty2023cooperative, kedarisetty2025loiter}, we introduced the concept of loiter lanes within a fixed-wing UAV corridor to manage traffic in the main lane.
Guidance algorithms were developed to divert UAVs from the main lane to the loiter lane in a conflict-free manner.

In this work, we address the complementary problem of reinserting a loitering UAV back into the main lane.
First, it is shown that in low traffic density and sufficient gaps between main lane UAVs, the outgoing loiter UAV can reenter the main lane without changing the positions of the main lane UAVs.
Next, the condition necessary for an outgoing UAV to be inserted into the main lane without a sufficient gap between main lane UAVs is presented.
Based on this condition, the corridor geometry and the guidance and automation algorithms that enable the insertion of a loiter UAV into the main lane are developed.
Finally, the effectiveness of the proposed algorithms is demonstrated through numerical simulations.

\section{Scenario and Problem statement}
Consider a fixed-wing UAV corridor with a main lane and a proposed loiter path, as shown in Fig.~\ref{fig:concept}.
The loiter path includes two transit lanes, a loiter circle, and two transit links.
The transit lanes and links serve as intermediate paths between the main lane and loiter circle, allowing for smooth entry and exit of the UAVs.
The transit links are circular arcs connecting the main lane to the transit lanes.
Fixed-wing UAVs flying in the corridor are considered to operate within speed limits of $V\ped{min}$ and $V\ped{max}$, and the radii of the transit links are larger than the UAVs' minimum turn radius.
\begin{figure}[H]
    \centering
    \includegraphics[width=0.4\linewidth]{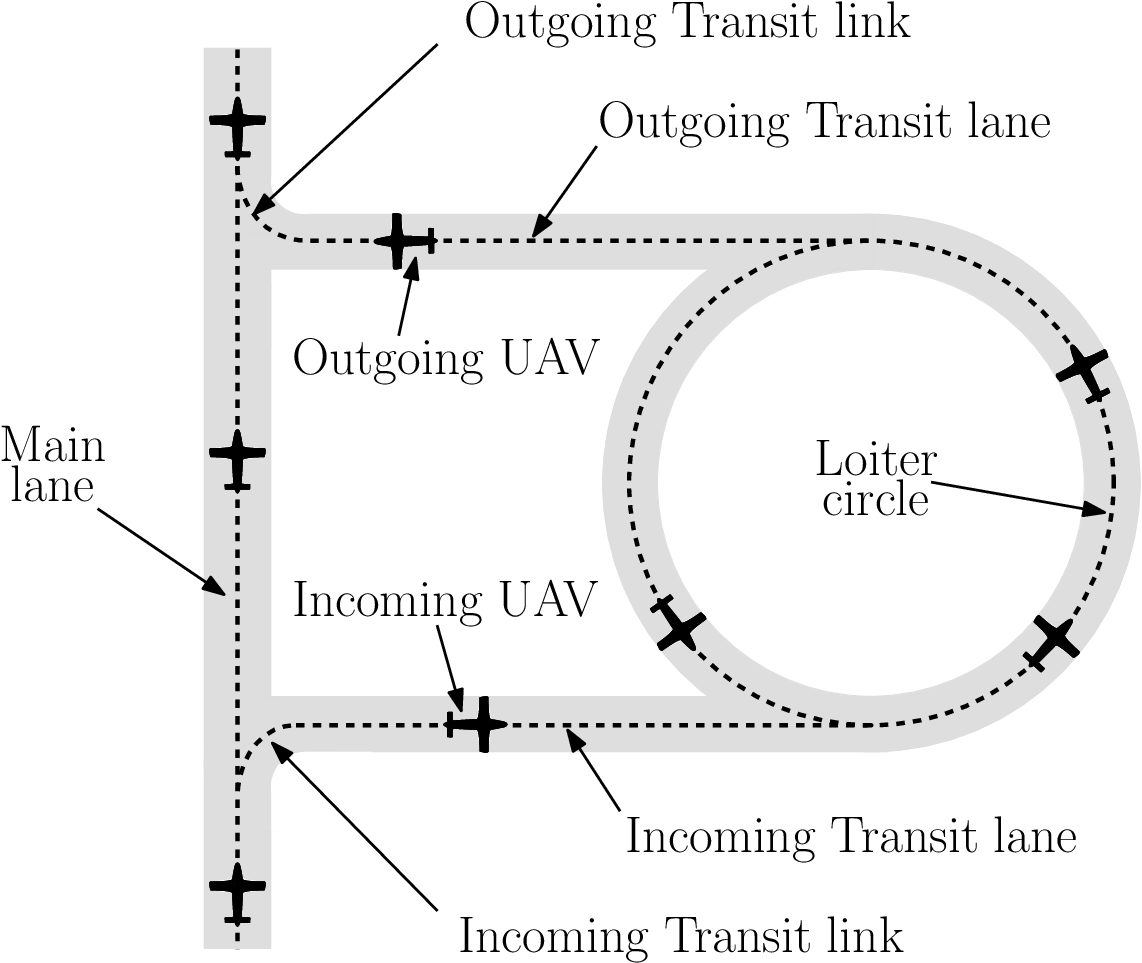}
    \caption{Loiter Lane Concept in Fixed-wing UAV Corridors}
    \label{fig:concept}
\end{figure}
\vspace{-0.4cm}
\begin{figure}[H]
    \centering
    \begin{subfigure}[b]{0.5\textwidth}
        \centering
\includegraphics[width=5.2cm]{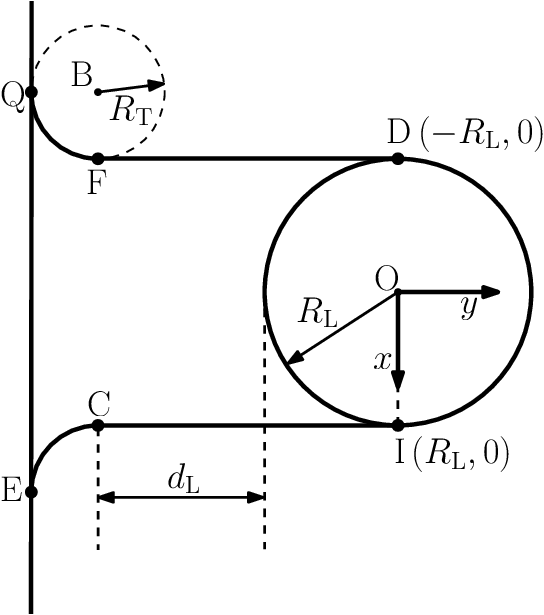}
\caption{Loiter path description}\label{fig:corridor}
    \end{subfigure}%
\hfill
    \begin{subfigure}[b]{0.5\textwidth}
        \centering
\includegraphics[width=6.2cm]{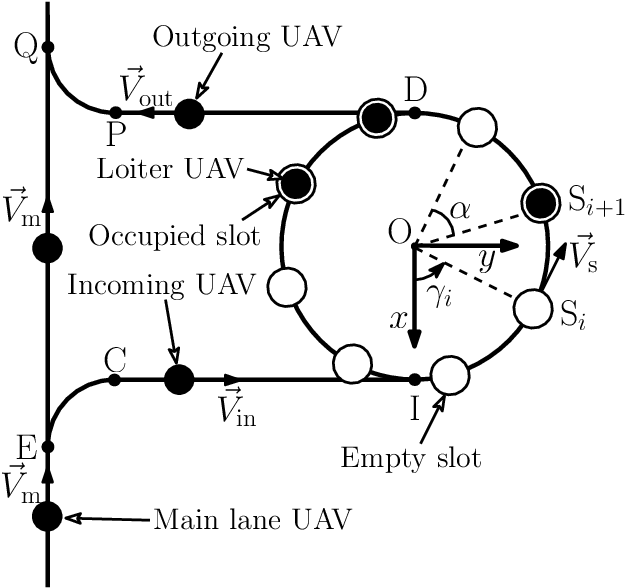}
\caption{Virtual slots in loiter circle}\label{fig:virtual_slots}
    \end{subfigure}%
     \caption{Corridor description and geometry of proposed loiter lane}
    \label{fig:dir_app}
\end{figure}

The details of the proposed loiter path and the corridor geometry are shown in Fig.~\ref{fig:corridor}.  
The diversion from the main lane starts at point E. 
An incoming UAV follows a circular path from E (transit link path) to enter the transit lane at C. 
The transit lane is a straight line path connecting C and point I, which is the point of entry to the loiter lane. 
FD and $\overset{\frown}{FQ}$ are the outgoing transit lane and transit link, respectively.
The centers and radii of the loiter lane and outgoing transit link path are (O, $R\ped{L}$) and (A, $-R\ped{T}$), respectively.
An Earth-fixed reference frame is placed at the center of the loiter lane (O).
Here, $d\ped{L}$ represents the lateral separation between point A and the loiter lane.

We introduce the concept of virtual slots in the loiter lane, as illustrated in Fig.~\ref{fig:virtual_slots}.
Each virtual slot $\mathrm{S}_i$, where $i \in [1,\,N]$, represents an imaginary position along the loiter lane that a UAV may occupy, provided it is unoccupied.
The loiter lane is characterized by the following properties:
a) the angular separation $\alpha$ between consecutive virtual slots is uniform, and
b) all virtual slots move at a constant speed $V\ped{s}$, where $V\ped{s} \in \left[V\ped{min},\, V\ped{max}\right]$.

\noindent \textbf{Problem Statement:} a) Compute the speed of outgoing UAV $V\ped{out} \in [V\ped{min},\,V\ped{max}]$ in the outgoing transit lane such that it can enter the main lane by maintaining minimum safety distance ($d\ped{s})$ with all other UAVs in the corridor, and b) compute the main lane UAVs' ($V_{\mathrm{m}_i}$) speeds for creating space in the main lane for outgoing UAV insertion. 

\section{Fixed-wing UAV Corridor Design}
The design for the proposed fixed-wing UAV corridor structure involves determining the loiter radius $R\ped{L}$ and loiter separation $d\ped{L}$.

\begin{prop}
For $V\ped{s}\leq V\ped{in}$, the choice of the loiter lane radius $R\ped{L} = \dfrac{d\ped{s}}{2 \sin^2 \left( \dfrac{\pi}{N} \right)}$ accommodates $N$ ($\geq 2$) equiangular virtual slots with a minimum safe distance $d\ped{s}$ between all UAVs. 
\end{prop}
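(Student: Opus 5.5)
The plan is to reduce the claim to two geometric separation checks: between any two occupied slots on the loiter circle, and between an incoming UAV on the straight transit lane $\overline{\mathrm{CI}}$ and the slots already on the circle. The $\sin^2$ in the formula points to the second check as the binding one. I use that the transit lane is tangent to the loiter circle at I, and that adjacent slots are separated by $\alpha = 2\pi/N$.

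\textbf{Step 1 (slot--slot separation).} Two slots $k$ steps apart are separated by the chord $2R\ped{L}\sin(k\pi/N)$, which is smallest for $k=1$ (and $k=N-1$). With the proposed radius this minimum equals
\[
2R\ped{L}\sin(\pi/N) = \frac{d\ped{s}}{\sin(\pi/N)} \geq d\ped{s}.
\]
Hence UAVs sitting in the rotating slots are always mutually safe, with equality only at $N=2$.

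\textbf{Step 2 (tangent-line geometry).} Place I at the tangency point and measure the angle $\varphi$ of a circle point from I. Its perpendicular distance to the tangent line (the transit lane) is
\[
R\ped{L}(1-\cos\varphi) = 2R\ped{L}\sin^2(\varphi/2).
\]
For $\varphi \in [\alpha,\, 2\pi-\alpha]$ this is at least $2R\ped{L}\sin^2(\pi/N) = d\ped{s}$. So any slot at angular distance at least $\alpha$ from I is at least $d\ped{s}$ away from every point of the transit lane. This is exactly where the chosen $R\ped{L}$ enters: it makes the slot one spacing away from I sit precisely $d\ped{s}$ off the transit line.

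\textbf{Step 3 (incoming UAV versus nearby slots).} The incoming UAV is timed so that it reaches I exactly when its target slot does. Write $\beta = V\ped{s}t/R\ped{L} \in [0,\alpha]$ for the target slot's angular lag behind I at time $t$ before arrival. The UAV is then at distance $V\ped{in}t \geq R\ped{L}\beta$ before I along the tangent. The slot behind the target lies at angle $\alpha+\beta \geq \alpha$ from I, so Step 2 covers it, and likewise every slot further away. The only remaining slot is the leading one, at angle $\alpha-\beta$ past I, which may be closer than $d\ped{s}$ to the line.

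\textbf{Main obstacle.} The difficulty is bounding the distance from the UAV position $P$ (on the tangent, at least $R\ped{L}\beta$ before I) to the leading slot $Q$ (at angle $\alpha-\beta$ past I). This is where $V\ped{s}\leq V\ped{in}$ is used.
\begin{itemize}
\item First, $|PQ|$ increases as $P$ moves further back along the tangent, since $Q$ lies ahead of I. It therefore suffices to take $|PI| = R\ped{L}\beta$.
\item With $P=(-R\ped{L}\beta,\,0)$ and $Q=\bigl(R\ped{L}\sin(\alpha-\beta),\; R\ped{L}(1-\cos(\alpha-\beta))\bigr)$ in tangent-aligned coordinates, compare $|PQ|$ with the chord from the point of the circle at angle $\beta$ behind I to $Q$.
\item That chord spans angle $\alpha$ and so has length $2R\ped{L}\sin(\pi/N)\geq d\ped{s}$.
\end{itemize}
The comparison rests on $\beta\geq\sin\beta$, and on showing that replacing the arc point by its tangent-line counterpart at equal arc length does not decrease the distance to $Q$. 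I would try to verify this by expanding $|PQ|^2$ in $\beta$ and using $\beta\geq\sin\beta$ together with $1-\cos\beta\geq 0$. If that estimate turns out too loose, the fallback is to use Step 2 directly: the vertical component of $PQ$ plus its horizontal component $R\ped{L}\beta + R\ped{L}\sin(\alpha-\beta)$ should suffice to reach $d\ped{s}$ on the whole range $\beta\in[0,\alpha]$.
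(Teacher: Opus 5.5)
The paper gives no argument for this proposition; it defers to the authors' earlier loiter-lane paper. Your proposal therefore has to stand on its own. Steps 1 and 2 are correct, and Step 2 isolates exactly the relation $R\ped{L}(1-\cos\alpha)=d\ped{s}$ that the formula encodes. The step you flag does go through, but not with the inequalities you name. Take $u=\alpha-\beta\in[0,\pi]$, $P$ at distance $R\ped{L}\beta$ before I, and $Q'$ the circle point at angle $\beta$ behind I. A direct expansion gives
\[
|PQ|^2-|Q'Q|^2=R\ped{L}^2\Bigl[\beta^2-2(1-\cos\beta)\cos u+2(\beta-\sin\beta)\sin u\Bigr].
\]
The last term is nonnegative by $\beta\ge\sin\beta$ and $\sin u\ge 0$. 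The middle term, however, is negative whenever $\cos u>0$, so the fact $1-\cos\beta\ge 0$ works against you. What closes the bracket is the upper bound $2(1-\cos\beta)\le\beta^2$ (i.e.\ $\sin(\beta/2)\le\beta/2$) together with $\cos u\le 1$. This gives $|PQ|\ge 2R\ped{L}\sin(\pi/N)=d\ped{s}/\sin(\pi/N)$, so the leading slot is never the binding one. The value $d\ped{s}$ is approached only through your Step 2 bound, by the trailing slot as $V\ped{in}/V\ped{s}\to\infty$, which confirms your reading of the $\sin^2$.

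The real omission is the time window. You restrict to $\beta=V\ped{s}t/R\ped{L}\in[0,\alpha]$, i.e.\ the last slot interval before arrival. But $\beta$ at lane entry exceeds $\alpha$ unless the transit lane is shorter than $(V\ped{in}/V\ped{s})R\ped{L}\alpha$. It is not shorter for the paper's parameters: the lane length $R\ped{L}+d\ped{L}\approx 315$ m (by symmetry with DF) is about three slot arcs $R\ped{L}\alpha\approx 105$ m. For $\beta>\alpha$ your case split breaks down. The slots within angle $\alpha$ of I are then in general occupied ones, possibly one on each side of I, and neither Step 2 nor your leading-slot estimate covers them.

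The repair is a horizontal-gap bound.
\begin{itemize}
\item A slot at angle $\theta\in[0,\alpha)$ ahead of I is at horizontal gap at least $s\ge R\ped{L}\beta>R\ped{L}\alpha$ from $P$.
\item A slot at angle $\theta\in(-\alpha,0)$ behind I satisfies $\theta=j\alpha-\beta$ for some integer $j\ge 1$, since it is $j$ spacings ahead of the target. Its gap is therefore $s+R\ped{L}\sin\theta\ge R\ped{L}\beta+R\ped{L}\theta=jR\ped{L}\alpha$.
\end{itemize}
Since $R\ped{L}\alpha=2\pi R\ped{L}/N\ge 2R\ped{L}\sin^2(\pi/N)=d\ped{s}$, both slots are safe. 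With this case added, the argument is complete.
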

\begin{proof} The proof for this proposition is provided in detail in our previous work~\cite{kedarisetty2025loiter}.
\end{proof}

\begin{prop}
    For a given $R\ped{L}$, $R\ped{T}$, $V\ped{min}$, and $V\ped{max}$, the loiter separation is given as
    \begin{equation}
        d\ped{L} = \dfrac{\Delta d\ped{P}}{V\ped{m}\left( \dfrac{1}{V\ped{min}} - \dfrac{1}{V\ped{max}}\right)} - \dfrac{\pi R\ped{T}}{2} - R\ped{L} \label{eq:1}
    \end{equation}
    where $\Delta d\ped{P}$ is the main lane patch length in which the outgoing UAV is to be inserted.
\end{prop}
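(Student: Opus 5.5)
The plan is to derive the formula from a timing argument. The outgoing UAV leaves the loiter lane at point F and follows the outgoing transit lane FD and then the transit link $\overset{\frown}{DQ}$ (or $\overset{\frown}{FQ}$ as labeled) to the merge point Q on the main lane. I would first express the total path length $L\ped{out}$ of this route in terms of the corridor geometry. Since A is the center of the outgoing transit link of radius $R\ped{T}$, and the link turns the UAV through a right angle onto the main lane, the arc contributes $\pi R\ped{T}/2$. The straight transit lane runs from the loiter circle (radius $R\ped{L}$ about O) across the lateral separation $d\ped{L}$ to the start of the link. Its length is therefore $d\ped{L} + R\ped{L}$, measured from the tangent/exit point on the loiter circle to the link. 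This gives
\begin{equation*}
L\ped{out} = d\ped{L} + R\ped{L} + \dfrac{\pi R\ped{T}}{2}.
\end{equation*}
I would justify this geometric decomposition carefully from Fig.~\ref{fig:corridor}, identifying the coordinates of F, D, A, Q in the Earth-fixed frame at O.

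Next, since $V\ped{out} \in [V\ped{min},V\ped{max}]$ is constant on this route, the arrival time at Q ranges over the interval $[L\ped{out}/V\ped{max},\, L\ped{out}/V\ped{min}]$. The achievable timing window therefore has width
\begin{equation*}
\Delta t = L\ped{out}\left(\dfrac{1}{V\ped{min}} - \dfrac{1}{V\ped{max}}\right).
\end{equation*}
Meanwhile, main lane traffic moves at speed $V\ped{m}$. During this window, a main-lane patch of length $V\ped{m}\Delta t$ streams past Q. For the outgoing UAV to be able to reach any desired insertion location within a patch of length $\Delta d\ped{P}$, I would impose the design condition $V\ped{m}\Delta t = \Delta d\ped{P}$. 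This means the reachable set of insertion points exactly covers the patch.

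Substituting $L\ped{out}$ and solving for $d\ped{L}$ yields Eq.~\eqref{eq:1} directly.

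The main obstacle is the geometric step. I must confirm that the straight segment length is exactly $d\ped{L}+R\ped{L}$, i.e., that the exit point F lies on the loiter circle along the direction perpendicular to the main lane, and that $d\ped{L}$ is measured from the circle to A along that direction. I would first try to read this directly off the construction in Fig.~\ref{fig:corridor}, taking FD to be a radial/tangential segment aligned with OA. If the actual geometry gives an offset, I would absorb it into the definition of $d\ped{L}$ as the paper does.

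A secondary point is to argue that equality (rather than an inequality) is the appropriate design choice. A smaller $d\ped{L}$ cannot cover the patch, while equality gives the most compact corridor.
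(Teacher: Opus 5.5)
Your proposal is correct and follows the paper's proof essentially step for step. The paper likewise takes the route length from D to Q as $\frac{\pi R_{\mathrm{T}}}{2}+d_{\mathrm{L}}+R_{\mathrm{L}}$, turns the speed limits into the reachable main-lane distances $d_{\mathrm{P_{min}}}$ and $d_{\mathrm{P_{max}}}$ (your $V_{\mathrm{m}}$ times the arrival-time window), sets $\Delta d_{\mathrm{P}}=d_{\mathrm{P_{max}}}-d_{\mathrm{P_{min}}}$, and rearranges for $d_{\mathrm{L}}$; it reads the path length off the figure without the extra geometric check you plan, so that step is more care than the original takes rather than a gap.
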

\begin{proof}
    Consider the fixed-wing UAV corridor geometry presented in Fig.~\ref{fig:corridor_patch}. 
    For the outgoing UAV at point D to rendezvous with a point P on the main lane at point Q, the following condition must be satisfied
\begin{equation}
    \dfrac{d\ped{P}}{V\ped{m}} = \dfrac{\dfrac{\pi R\ped{T}}{2} + d\ped{L} + R\ped{L}}{V\ped{out}}
\end{equation}
where $d\ped{P}$ is the distance between points P and Q, and the speed of point P is assumed to be $V\ped{m}$.
The minimum and maximum distances of the main lane that can be reached by the UAV, abiding by its speed limits, are
\begin{equation}
    d\ped{P_{min}} = \dfrac{V\ped{m}}{V\ped{max}}\left(\dfrac{\pi R\ped{T}}{2} + d\ped{L} + R\ped{L}\right), \quad d\ped{P_{max}} = \dfrac{V\ped{m}}{V\ped{min}}\left(\dfrac{\pi R\ped{T}}{2} + d\ped{L} + R\ped{L}\right) \label{eq:3}
\end{equation}
Accordingly, the length of the patch in which the outgoing UAV can be inserted is $\Delta d\ped{P} = d\ped{P_{max}} - d\ped{P_{min}}$, which can be expressed from Eq.~\eqref{eq:3} as
\begin{equation}
    \Delta d\ped{P} = V\ped{m}\left(\dfrac{\pi R\ped{T}}{2} + d\ped{L} + R\ped{L}\right)\left( \dfrac{1}{V\ped{min}} - \dfrac{1}{V\ped{max}} \right) \label{eq:4}
\end{equation}
Rearranging Eq.~\eqref{eq:4}, yields the relation between $d\ped{L}$ and $\Delta d\ped{P}$, as given in Eq.~\eqref{eq:1}.
\end{proof}

\section{Reinsertion Guidance and Automation}
This section develops the guidance and automation algorithms for inserting the outgoing UAV into the feasible patch of insertion on the main lane, which is indicated as a green strip in Fig.~\ref{fig:corridor_patch}.
The main lane UAVs in this feasible patch are denoted as $\mathrm{M}_i$, where $i\in[1,\,m]$.
The distance between adjacent UAVs is denoted by $d_{j\, j+1}$ with $j\in[1,\,m-1]$, and the distances from the start of the strip to the first UAV and the last UAV to the end of the strip are denoted as $d_{0,1}$ and $d_{m\,m+1}$, respectively.

The necessary condition for inserting the outgoing UAV at point D into the main lane is 
\begin{equation}
    \Delta d\ped{P} \geq m d\ped{s}
 \end{equation}
If this condition is violated, it implies that there does not exist sufficient space in the feasible patch for the outgoing UAV to be inserted and the outgoing UAV continue to loiter.
If this condition is satisfied, then two further possibilities arise:  
(a) the gap between any adjacent UAVs is greater than $2d\ped{s}$, or either $d_{0,1}$ or $d_{m,m+1}$ is greater than $d\ped{s}$; and  
(b) the gap between all adjacent UAVs is less than $2d\ped{s}$, and both $d_{0,1}$ and $d_{m,m+1}$ are less than $d\ped{s}$. 

In the first case, the solution is straightforward—the outgoing UAV selects the first feasible gap in the strip. In the second case, the UAVs in the green patch are assigned speed commands to adjust their velocities from $V\ped{m}$ to $V\ped{max}$ or $V\ped{min}$ as needed to create sufficient space. The UAV speed in main lane are denoted by $V_{i}$ where $i\in[1,m]$. Algorithm~\ref{alg:time} computes the reinsertion time $t_\mathrm{out}$ and determines the appropriate patch section for reinsertion. The patch is divided into segments and $h$ index of the segment. The outgoing and main lane UAV speeds are determined using Algorithm \ref{alg:V_speed}.

\begin{figure}[H]
    \centering
    \includegraphics[width=0.4\linewidth]{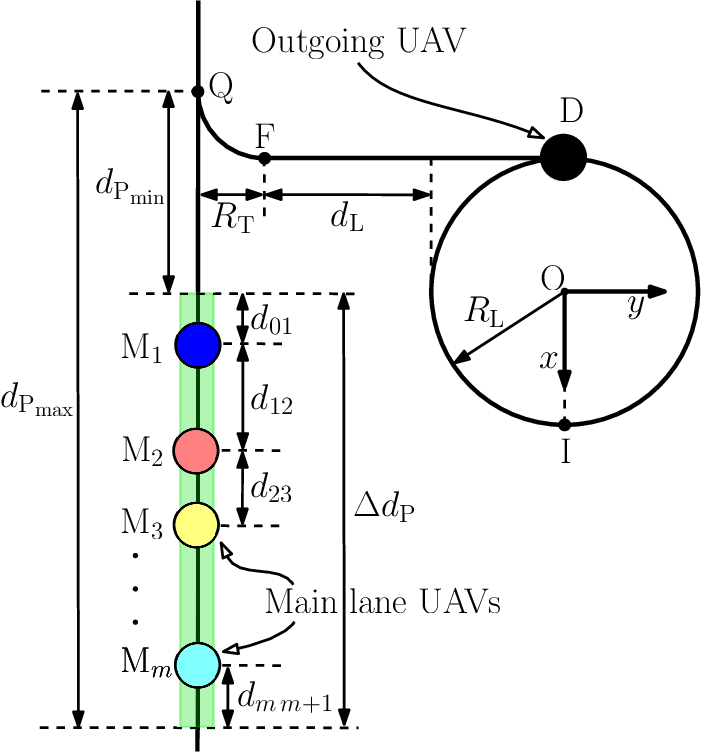}
    \caption{Fixed-wing UAV corridor geometry}
    \label{fig:corridor_patch}
\end{figure}

\begin{algorithm}[H]
\caption{Algorithm to determine the reinsertion time of an outgoing UAV and the section to which it should be inserted }
\label{alg:time}
\SetKwInOut{KwIn}{Input}
\SetKwInOut{KwOut}{Output}
\SetKwInOut{KwPro}{Procedure}
\SetKwInOut{KwIni}{Initialize}

\KwIn{$m$,\,$d\ped{P_{min}}$,\,$d\ped{P_{max}}$,\,$d\ped{s}$,\,$d_{i,i+1}$,\,$d\ped{P}$}
\KwOut{$t\ped{out}$, \, $h$} 
\KwPro{}

\If{$\Delta d_\mathrm{P}<m\cdot d_\mathrm{s}$}{
 \textbf{return} $t_\mathrm{out}==\mathrm{NULL}$ 
 }
 
\For{$i = 0$ \KwTo $m$}
{ 
    \If{(i==0)}{
    \If{$d_{i,i+1}>d_\mathrm{s}$}{ $t_\mathrm{out}=\frac{d_\mathrm{P_{min}}}{V_\mathrm{m}}$ 
        }
    }
    \If{(i==m)}{
    \If{$d_{i,i+1}>d_\mathrm{s}$}{ $t_\mathrm{out}=\frac{d_\mathrm{P_{max}}}{V_\mathrm{m}}$ 
        }
    }
    \Else{
    \If{$d_{i,i+1}>2d_\mathrm{s}$}{ $t_\mathrm{out}=\frac{d_\mathrm{i}+d_\mathrm{i+1}}{2V_\mathrm{m}}$ 
    }
    }
}

\If{$t_\mathrm{out}==\mathrm{none}$}{
     $h=\mathrm{find}\left( \max(d_{i,i+1}) \right)$,  {$0 \le i \le m$}

     $t_\mathrm{out}=\frac{d_\mathrm{P_{max}}-((m+1)-h)\cdot d_\mathrm{s}}{V_\mathrm{m}}$    
}

\Return $t_\mathrm{out}$,$h$
\end{algorithm}

If the speeds of the main-lane UAVs need to be adjusted, the main-lane UAVs will compress to maintain the safety distance $d_\mathrm{s}$, providing sufficient space in the $h$-th segment for outgoing UAV to insert safely. The outgoing UAV can also be inserted at the beginning or the end of the patch.

\begin{algorithm}[H]
\caption{Algorithm for calculating all UAV speeds}
\label{alg:V_speed}
\SetKwInOut{KwIn}{Input}
\SetKwInOut{KwOut}{Output}
\SetKwInOut{Procedure}{Procedure}
\SetKwInOut{Initialize}{Initialize}

\KwIn{$m$,\,$d_\mathrm{P_{min}}$,\,$d_\mathrm{P_{max}}$,\,$d_\mathrm{s}$,\,$t_\mathrm{out}$,\,$h$,\,$d_{i,i+1}$}
\KwOut{$V_\mathrm{out}$,\,$V_i$} 

\Procedure{}
 $V_\mathrm{out}=\dfrac{\dfrac{\pi R\ped{T}}{2} + d\ped{L} + R\ped{L}}{t_\mathrm{out}}$ 
\For{$i = 1$ \KwTo $m$}{
    \If{$\mathrm{isEmpty}{(h)}$}{
    $V_{i}=V_\mathrm{m}$
    }
    \Else{
        \If{$i  < h $}{ 
            \If{$(d_{i,i-1}>d_\mathrm{s} \, \, \&\& \, \,  i>1)||(d_{i,i-1}>0 \, \, \&\& \, \, i==1)$}{
            $V_{i} = V_\mathrm{max}$ } 
            }
        
        \If{$i  \geq h$} {
            \If{$(d_{i,i+1}>d_\mathrm{s} \, \, \&\& \, \,  i<m)||(d_{i,i+1}>0 \, \, \&\& \, \, i==m)$}{
            $V_{i} = V_\mathrm{min}$ } 
        }
        \If{$V_{i}==\mathrm{none}$}{
         $V_{i} = V_\mathrm{m}$ 
        }
    }
}

\Return $V_\mathrm{out}$,$V_{_i}$
\end{algorithm}

\section{Simulation results}

The proposed algorithm are validated through simulations by a modified unicycle model with variable speed, as shown in Eq.~\eqref{eq:model}, to emulate the dynamics of a fixed-wing UAV.
\begin{equation}
    \begin{aligned}
    \dot{x} &= V \cos\theta, \quad 
    \dot{y} = V \sin\theta, \quad
    a = V \dot{\theta}
    \label{eq:model}
    \end{aligned}
\end{equation}
where $(x,\,y)$ is the UAV position, $V$ is its speed, $a$ is the lateral acceleration, and $\theta$ is the flight path angle. $V$ and $a$ are control input to the UAVs and it is provided by the guidance algorithm.
Two simulation cases were considered: (a) a scenario in which a feasible $d_{i,i+1}$ exists, meaning that one section of the patch is suitable for insertion, and  
(b) a scenario in which no feasible $d_{i,i+1}$ is available initially. The parameters chosen for the simulation are listed in Table \ref{table:params}.

\begin{table}[h]
\centering
\caption{Simulation Parameters}
\begin{tabular}{c c c c c c c c c c c}
\hline
\hline
$V_{\min}$ & $V_{\max}$ & $R\ped{L}$ & $R\ped{T}$ & $d\ped{L}$ & $N$ & $m$ & $d\ped{s}$ & $\Delta d_\mathrm{P}$ \\
\hline
15.0\,m/s & 35.0\,m/s & 100.0\,m & 80.0\,m  & 215.330\,m & 6 & 6 & 50.0\,m & 420\,m\\
\hline
\hline
\end{tabular}
\label{table:params}
\end{table}

Figure~\ref{fig:sim1a} depicts the simulation setup for Case-1, showing the sequence of events for a single outgoing UAV (in black) rejoining the main lane from the loiter circle. At $t=0\,\mathrm{s}$, multiple UAVs are initialized in the main lane with different separation distances. When the outgoing UAV reaches the reinsertion point of the loiter circle at $t=12.32\,\mathrm{s}$, the desired velocity of the UAV is computed and a suitable position within the green patch is determined for reinsertion. At $t=30.22\,\mathrm{s}$, the outgoing UAV enters the transit link path, and by $t=37.42\,\mathrm{s}$, it merges to the main lane. At $t = 38.2\,\mathrm{s}$, it can be observed that the UAV's are maintaining an appropriate safe distance from the other UAV and moving forward.\\

The velocity, lateral acceleration, and flight-angle profiles of the outgoing UAV, along with the separation distance between main lane UAVs, are shown in Fig.~\ref{fig:sim1b}.

\begin{figure}[H]
    \centering

    \begin{subfigure}{1.0\textwidth}
        \centering
        \includegraphics[width=\textwidth]{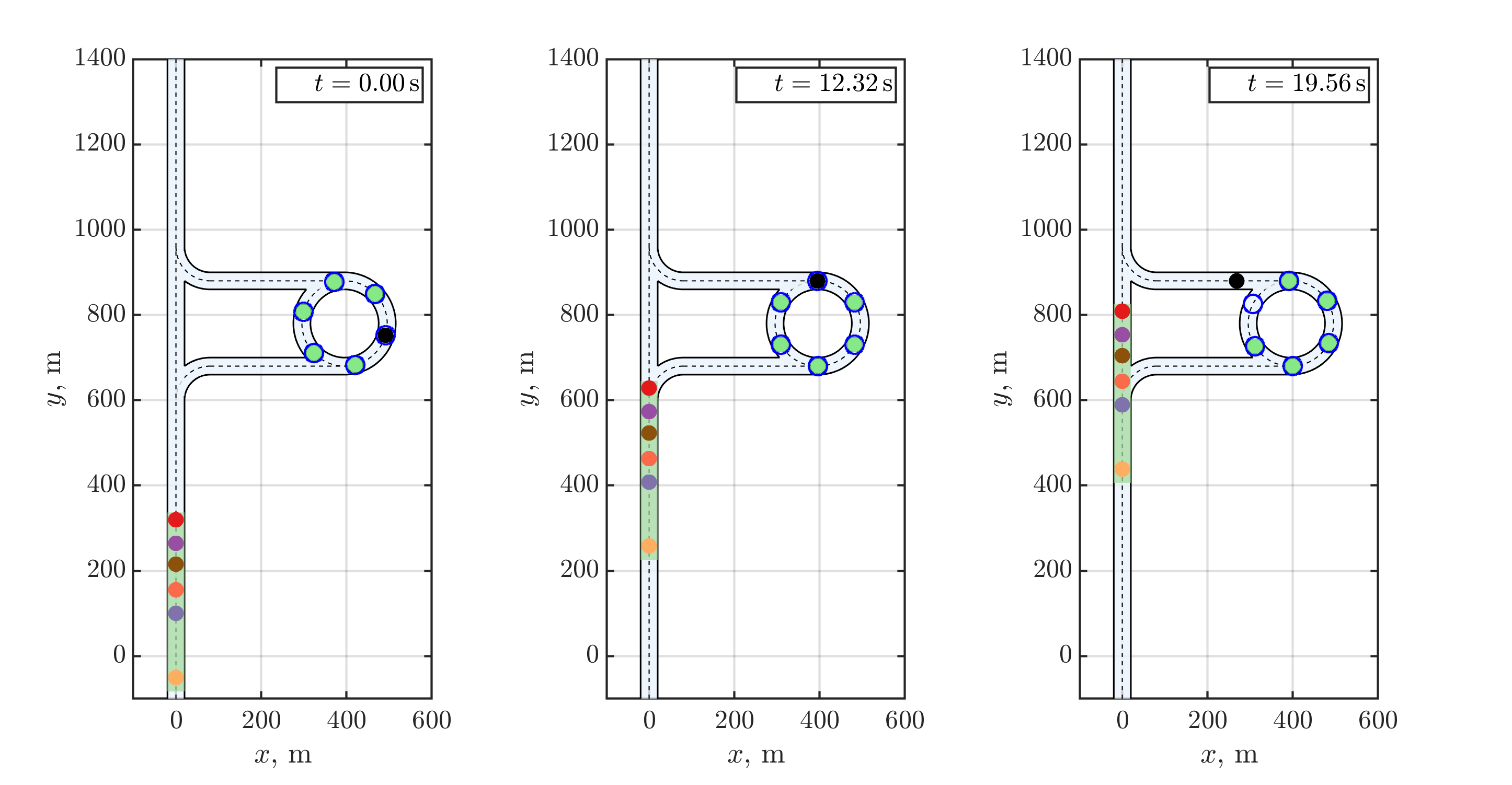}
    \end{subfigure}

    \begin{subfigure}{1.0\textwidth}
        \centering
        \includegraphics[width=\textwidth]{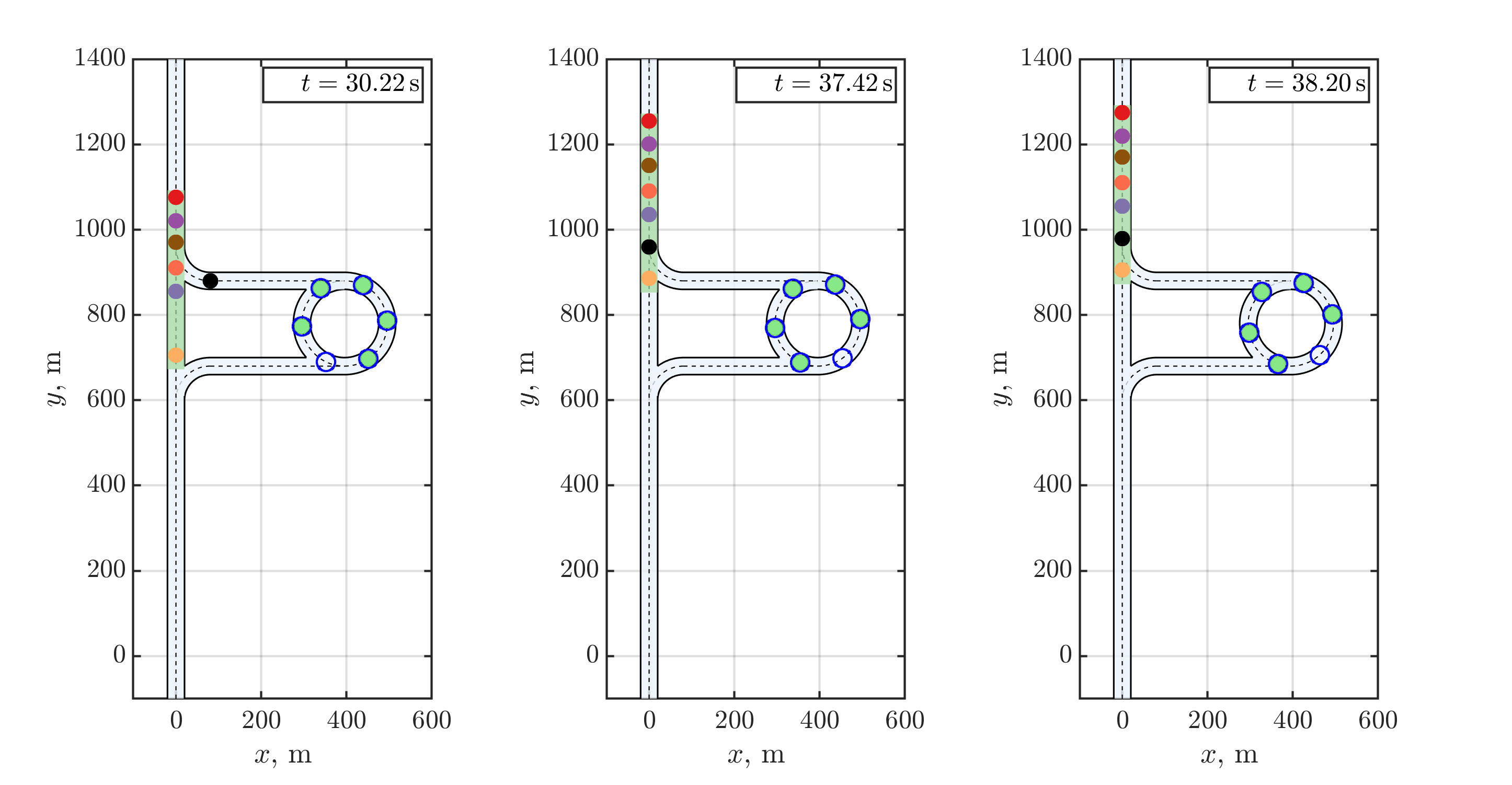}
    \end{subfigure}

    \caption{Case-1 Trajectory plots}
    \label{fig:sim1a}
\end{figure}

\begin{figure}[H]
    \centering
    \includegraphics[width=\textwidth]{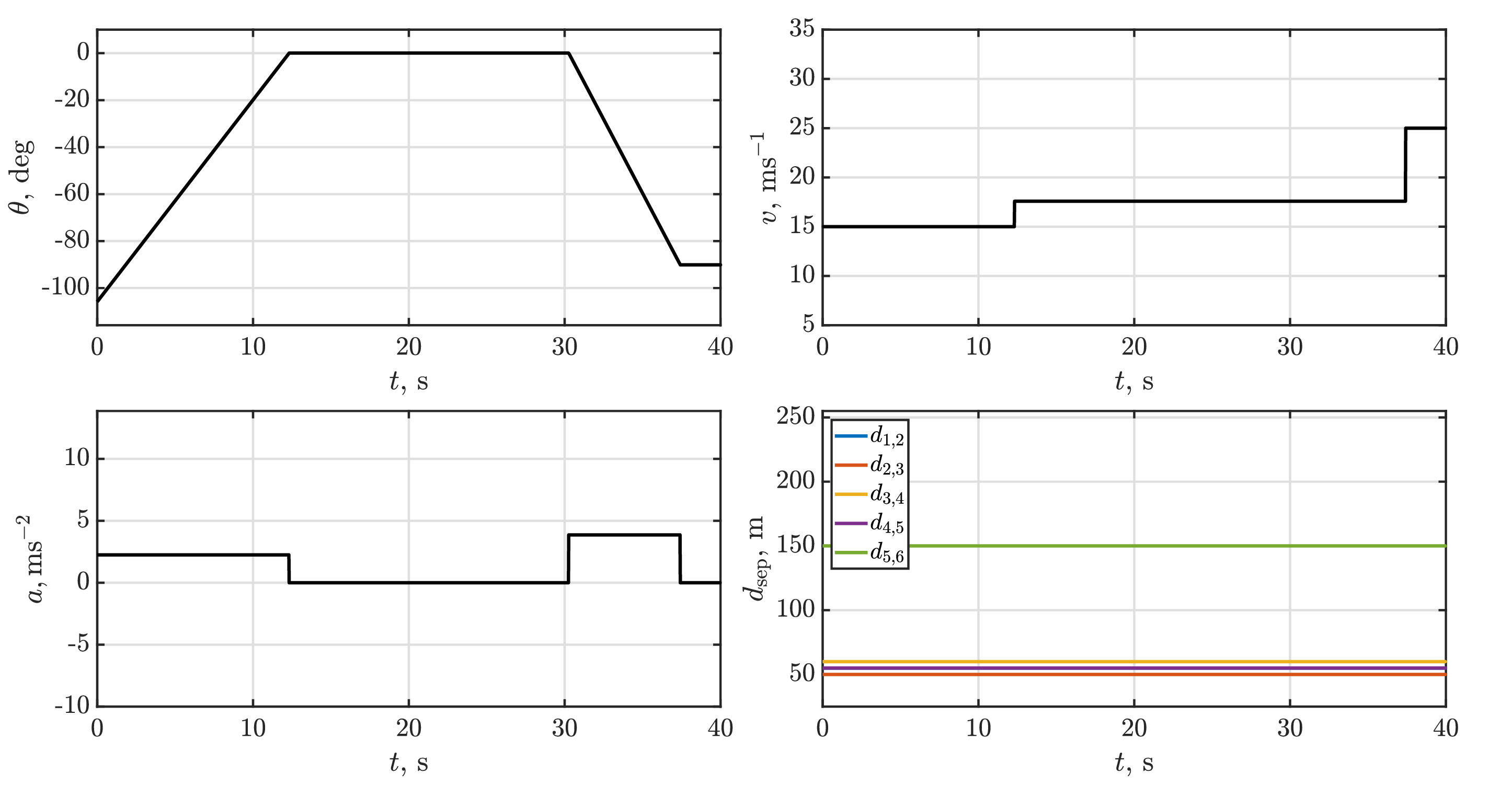}
    \caption{Case-1 Analysis plots}
    \label{fig:sim1b}

\end{figure}

Figure~\ref{fig:sim2a} depicts the simulation setup for Case-2. In this scenario, the UAV are initialized such there will no feasible location initially to get inserted. At $t=0\,\mathrm{s}$, multiple UAVs are initialized in the main lane. At $t=12.32\,\mathrm{s}$, the outgoing UAV reaches point D and finding no immediate feasible location to get inserted in green patch.The algorithm re-arrange the UAV's in main lane to create a feasible location for outgoing UAV. By $t=19.56\,\mathrm{s}$, the main lane UAV adjust it's UAV. At $t=30.2\,\mathrm{s}$, enters the transit link, and by $t=37.42\,\mathrm{s}$, it merges to the main lane. This scenario’s analysis plots are presented in Fig.~\ref{fig:sim2b}.

\begin{figure}[H]
    \centering
    \includegraphics[width=\textwidth]{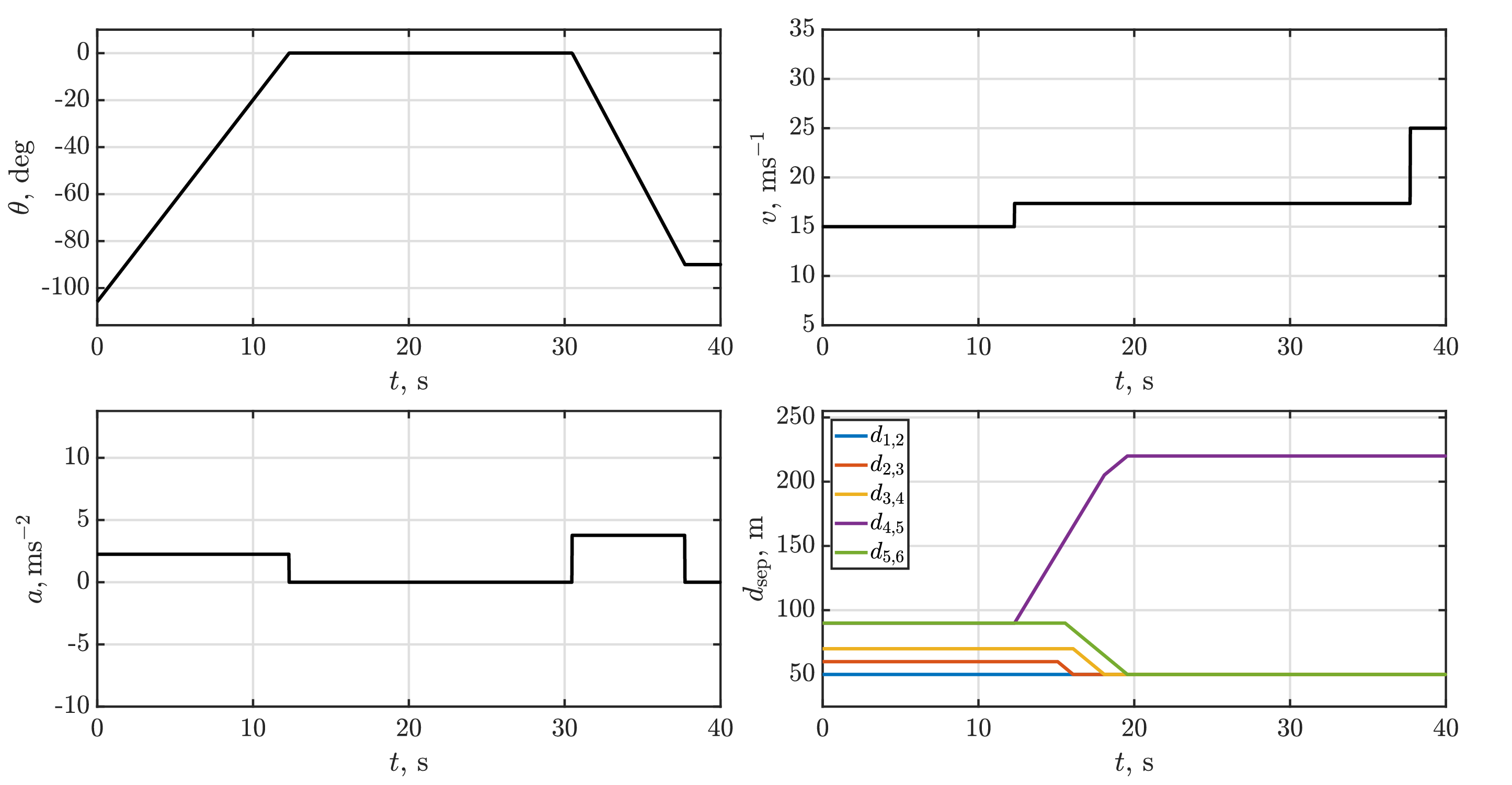}
    \caption{Case-2 Analysis plots}
    \label{fig:sim2b}
\end{figure}

\begin{figure}[H]
    \centering

    \begin{subfigure}{1.0\textwidth}
        \centering
        \includegraphics[width=\textwidth]{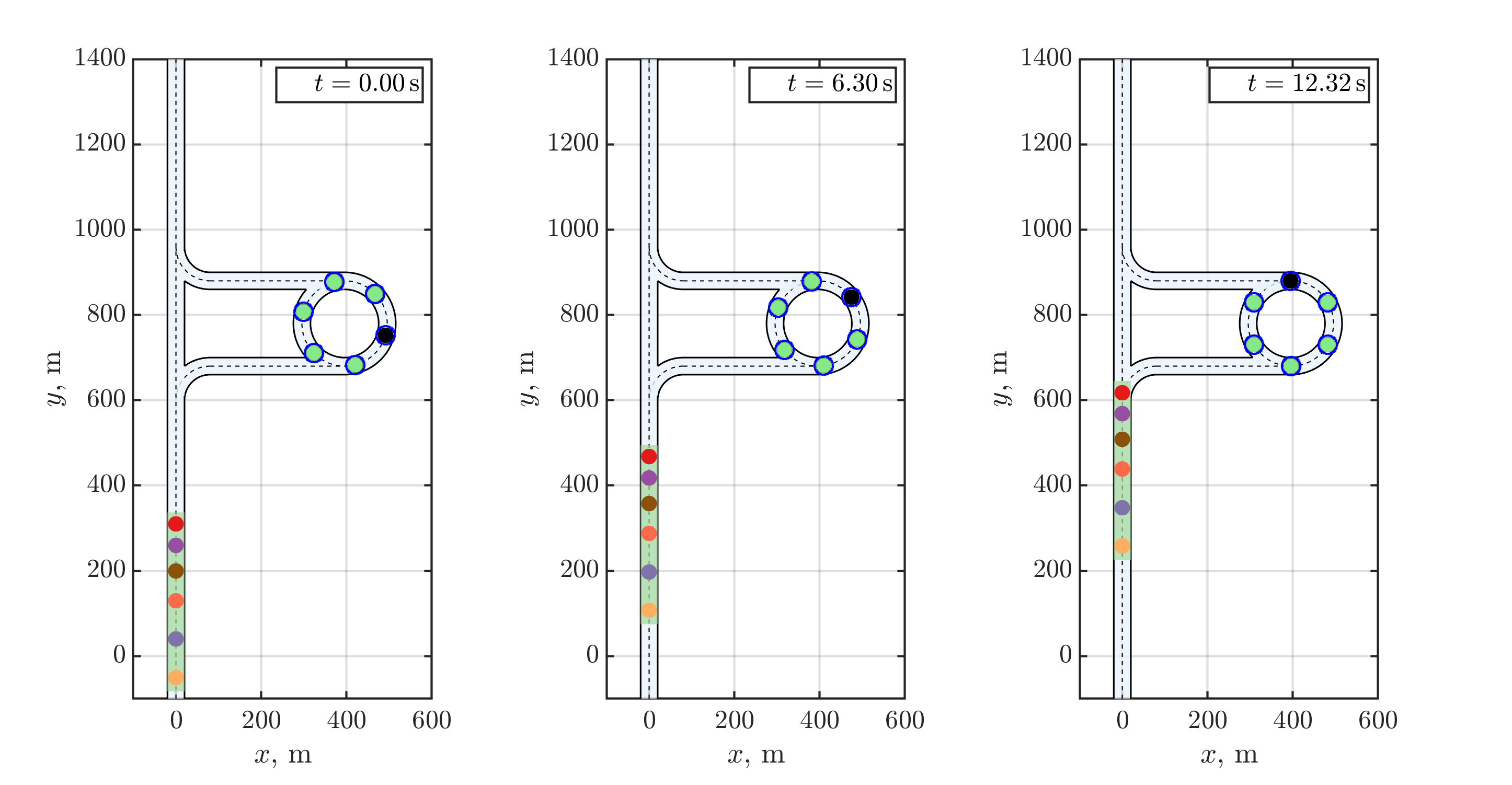}
    \end{subfigure}

    \begin{subfigure}{1.0\textwidth}
        \centering
        \includegraphics[width=\textwidth]{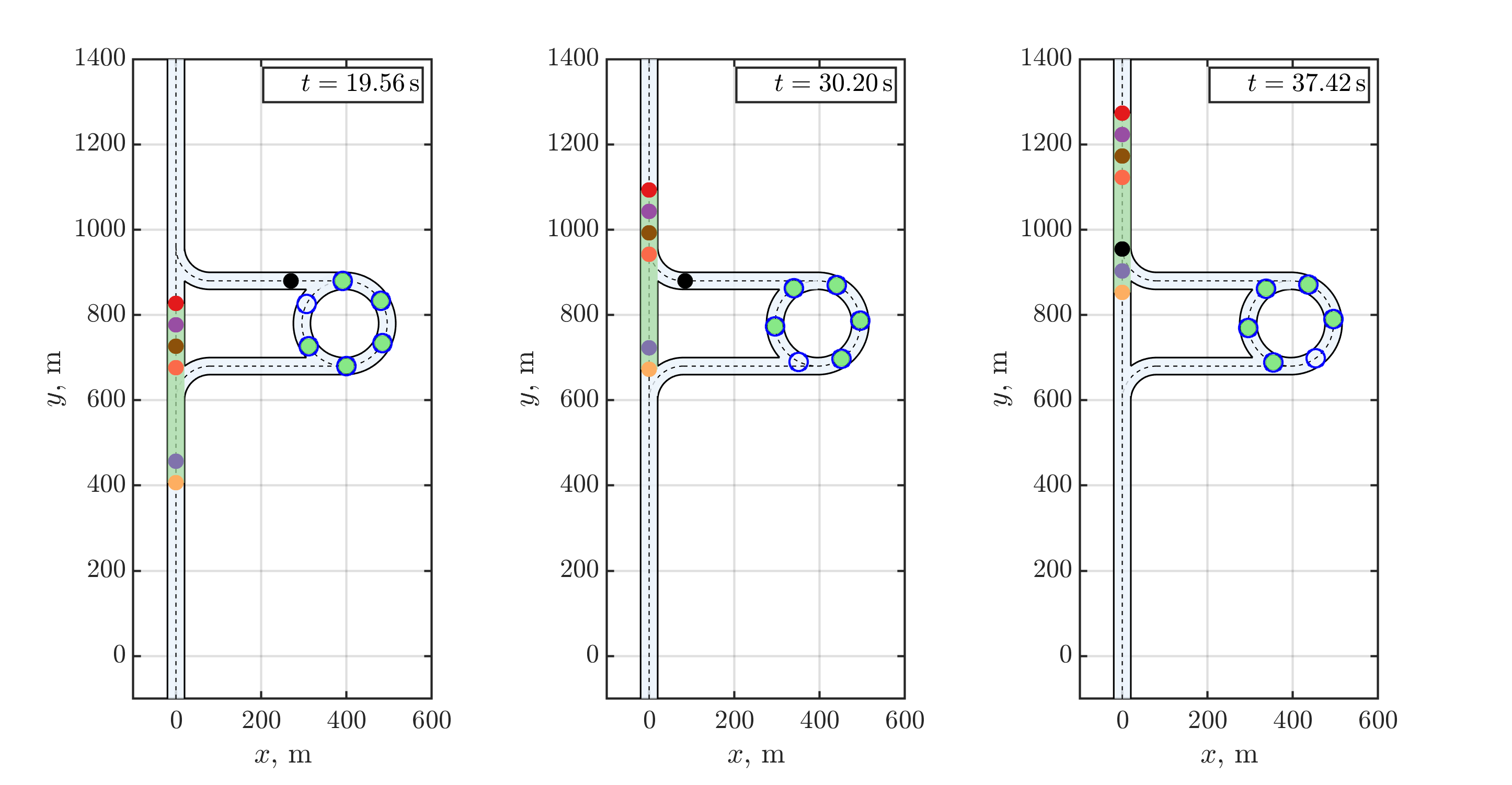}
    \end{subfigure}

    \caption{Case-2 Trajectory plots}
    \label{fig:sim2a}
\end{figure}

\section{Conclusion}
This paper presents a reinsertion guidance algorithm that enables a UAV to transition safely from a loiter circle back into the main lane. The algorithm generates commands for both the outgoing UAV and the main-lane UAVs so that the reinsertion can be carried out smoothly and safely. The speeds of the main-lane UAVs are adjusted only when necessary to accommodate the reinserting UAV. In addition, the design of the corridor for safe insertion—while ensuring the required safety distance—is discussed. The effectiveness of the proposed guidance strategy is demonstrated through simulation results.

\bibliography{sample}

\end{document}